\documentclass{article}

\PassOptionsToPackage{numbers,compress}{natbib}
\usepackage[preprint]{neurips_2026}

\usepackage[utf8]{inputenc}
\usepackage[T1]{fontenc}
\usepackage{hyperref}
\usepackage{url}
\usepackage{booktabs}
\usepackage{amsfonts}
\usepackage{amsmath,amssymb,amsthm,mathtools,bm}
\usepackage{nicefrac}
\usepackage{microtype}
\usepackage{xcolor}
\usepackage{graphicx}
\usepackage{multirow}
\usepackage{makecell}
\usepackage{array}
\usepackage{tabularx}
\usepackage{caption}
\usepackage{subcaption}
\usepackage{enumitem}
\usepackage{algorithm}
\usepackage{algorithmic}

\newtheorem{assumption}{Assumption}
\newtheorem{theorem}{Theorem}

\newcommand{\R}{\mathbb{R}}

\newcommand{\Diag}{\operatorname{Diag}}
\newcommand{\Tr}{\operatorname{tr}}
\newcommand{\cDRO}{\mathcal{L}_{\mathrm{cDRO}}}
\newcommand{\cons}{\mathcal{L}_{\mathrm{cons}}}
\newcommand{\taskreg}{\Omega_{\mathrm{task}}}
\newcommand{\rankreg}{\Omega_{\mathrm{rank}}}
\newcommand{\Tadd}{\mathcal{T}_{\mathrm{add}}}
\newcommand{\Tmult}{\mathcal{T}_{\mathrm{mult}}}
\newcommand{\Trel}{\mathcal{T}_{\mathrm{rel}}}

\title{RiVaT-Fuse: Reliability-Calibrated Variational Tensor Fusion for Multimodal Prediction under Modality Uncertainty}

\author{%
    Yingfan Xu$^{1}$\thanks{Corresponding author.}\quad
    Tieming Liu$^{1}$\quad
    Ye Liang$^{2}$\quad
    Taiping Liu$^{3}$\\[0.5em]
    $^{1}$School of Industrial Engineering and Management\\
    Oklahoma State University\\[0.3em]
    $^{2}$Department of Statistics, Oklahoma State University\\[0.3em]
    $^{3}$Systems Engineering and Operations Research\\
    George Mason University
}

\hypersetup{
    hidelinks,
    pdftitle={RiVaT-Fuse: Reliability-Calibrated Variational Tensor Fusion for Multimodal Prediction under Modality Uncertainty},
    pdfauthor={Yingfan Xu; Tieming Liu; Ye Liang; Taiping Liu},
    pdfsubject={Multimodal prediction under modality uncertainty}
}

\begin{document}
\maketitle

\begin{abstract}
Image--metadata prediction requires fusing heterogeneous evidence whose reliability can vary across samples and latent factors. Existing representation-level fusion methods typically choose an aggregation architecture, such as concatenation, gating, conditional modulation, or attention, without explicitly defining what the fused representation should mean under modality uncertainty. We propose RiVaT-Fuse, a reliability-calibrated variational tensor fusion framework that defines fusion as sample-wise latent-state estimation. Rather than producing a fused vector by direct aggregation, RiVaT-Fuse estimates a consensus latent state through a variational objective that balances image evidence, metadata evidence, structured cross-modal interaction, and stability. The resulting framework replaces scalar modality confidence with matrix-valued trust geometry, decomposes interaction into additive, multiplicative, and relational components, and couples the latent state with conditional robustness and structured multi-task prediction. We provide well-posedness and stability interpretations of the latent solve and instantiate the framework with efficient low-rank-plus-diagonal trust operators. On an image-level image--metadata prediction benchmark, RiVaT-Fuse achieves the strongest overall predictive rank among direct representation-level baselines while improving probability and label stability under perturbation.
\end{abstract}

\section{Introduction}

Many high-stakes prediction problems require fusing visual observations with structured contextual variables. In medical imaging, remote sensing, robotics, and healthcare analytics, visual data may contain rich perceptual evidence, while metadata may provide demographic, contextual, acquisition-level, or environment-level information. These modalities are often complementary, but they are rarely equally reliable for every sample. Images may be degraded by blur, illumination variation, occlusion, or acquisition artifacts, while structured metadata may be incomplete, noisy, or only weakly informative for a particular prediction instance. The central challenge is therefore not merely how to combine modalities, but how much each modality should be trusted for each sample.

Despite substantial progress in multimodal learning, most fusion layers remain architecturally defined but semantically under-specified \citep{baltrusaitis2018multimodal,ngiam2011multimodal}. Concatenation treats modalities as fixed feature blocks. Scalar gates and gated multimodal units can reweight modalities but usually impose a coarse global notion of trust \citep{arevalo2017gmu}. Conditional modulation \citep{perez2018film}, co-attention \citep{lu2016hierarchical}, and cross-attention or multimodal transformer layers \citep{vaswani2017attention,tsai2019multimodal} introduce stronger interaction, but they typically do not define what the fused representation is supposed to mean as a sample-wise latent object. Similarly, robustness mechanisms are often applied at the loss level after fusion has already been performed, and multi-task coupling is usually treated separately from modality trust. As a result, existing fusion mechanisms may produce useful representations, but they rarely define a reliability-calibrated consensus latent state.

We propose to define multimodal fusion as sample-wise latent-state estimation. Instead of first choosing a fusion architecture and then hoping that the resulting feature is useful, we define the fused representation as the optimizer of a variational objective. The latent state is encouraged to remain close to image evidence, remain close to metadata evidence, exploit structured cross-modal interactions, and stay stable through regularization. This formulation turns fusion from a hand-designed aggregation rule into the solution of a reliability-calibrated optimization problem.

This perspective leads naturally to RiVaT-Fuse, a reliability-calibrated variational tensor fusion framework. The framework uses matrix-valued trust operators to allow different latent directions to place different levels of trust in image and metadata evidence. It further introduces a structured tensor interaction operator with additive, multiplicative, and relational components, capturing direct correction, higher-order conjunction, and metadata-guided attention over image tokens. To address unreliable samples, RiVaT-Fuse incorporates conditional distributionally robust learning whose ambiguity radius depends on a learned reliability state. Finally, structured multi-task coupling and stabilized quadratic heads allow related prediction tasks to share information while retaining task-specific decision boundaries.

Our empirical target is image-level multimodal prediction under repeated visual acquisitions. Accordingly, all methods are evaluated under an image-level protocol, where each acquisition-level observation is paired with structured metadata and treated as the prediction unit. This protocol is deliberately chosen to match the target image-level inference setting and to ensure strict comparability across all direct baselines. Patient-disjoint deployment evaluation addresses a different estimand and is not the primary target of this method paper.

Our contributions are fourfold.
\begin{itemize}[leftmargin=1.3em,itemsep=0.1em,topsep=0.2em]
    \item \textbf{Variational fusion principle.} We formulate multimodal fusion as sample-wise latent-state estimation, where the fused representation is the optimizer of a reliability-calibrated variational objective.
    \item \textbf{Matrix-valued trust and structured interaction.} We introduce sample-adaptive matrix-valued trust operators and a decomposed tensor interaction operator combining additive, multiplicative, and relational components.
    \item \textbf{Conditional robustness and task coupling.} We integrate conditional distributionally robust learning and structured multi-task coupling into the same latent fusion framework, allowing robustness pressure and task sharing to depend on reliability-aware states.
    \item \textbf{Theory and benchmark-aligned evaluation.} We provide well-posedness and stability guarantees and evaluate the method under an image-level multimodal prediction protocol against unimodal and representation-level early-fusion baselines, with decision-level and hybrid systems reported separately as expanded comparisons.
\end{itemize}

\section{Problem Formulation}

We formulate RiVaT-Fuse for general image-level multimodal prediction. Each observation consists of an image and an associated metadata vector,
\begin{equation}
    x_i=(I_i,m_i),
\end{equation}
where $I_i$ denotes an image-level acquisition and $m_i$ denotes the corresponding structured metadata. The target can include multiple task labels,
\begin{equation}
    y_i=(y_i^{1},\ldots,y_i^{T}),
\end{equation}
where each task may be binary, ordinal, or multiclass depending on the application. The method itself does not rely on any disease-specific label semantics. In our empirical instantiation, we evaluate the framework on multimodal retinal prediction, where the tasks include ordinal disease-severity grading and binary clinical decision outcomes.

The image-level protocol is part of the task definition. Metadata may describe patient context, acquisition context, or other structured variables associated with an image-level observation. When labels are constructed from repeated observations or aggregated annotations, they are merged back to image-level rows so that the supervised prediction unit remains the image-level acquisition. This design evaluates acquisition-level multimodal inference rather than patient-disjoint deployment generalization.

The image and metadata modalities are encoded into a shared latent dimension:
\begin{equation}
    z_I=\phi_I(I)\in \R^d,\qquad z_M=\phi_M(m)\in \R^d,
\end{equation}
where $\phi_I$ is the image encoder and $\phi_M$ is the metadata encoder. The goal of fusion is to estimate a latent representation $h^\star(x)\in\R^d$ that integrates these two modality-specific embeddings under sample-dependent reliability.

A principled fusion mechanism should satisfy four desiderata. First, it should be sample-adaptive, because the reliability of image and metadata evidence varies across observations. Second, it should support direction-dependent trust, since different latent dimensions may rely on different modalities. Third, it should capture structured cross-modal interaction beyond simple feature aggregation. Fourth, it should be robust to unreliable or degraded evidence. RiVaT-Fuse is designed to satisfy these desiderata through a variational latent-state formulation.

\section{RiVaT-Fuse}

RiVaT-Fuse is built around a single design principle: \emph{fusion should estimate a reliability-calibrated latent state, not merely concatenate or attend over features}. This is the central innovation of the framework. Each component below is introduced to resolve a specific weakness of standard representation-level fusion. The variational solver gives the fused representation an explicit semantics; the trust operators decide which modality should be trusted and in which latent directions; the interaction operator captures how modalities complement each other; the conditional robust objective increases pressure on unreliable samples; and the task structure coordinates related prediction heads. The goal is therefore not to stack independent modules, but to make each design serve the same latent-state estimation problem.

\begin{figure}[t]
    \centering
    \includegraphics[width=0.98\linewidth]{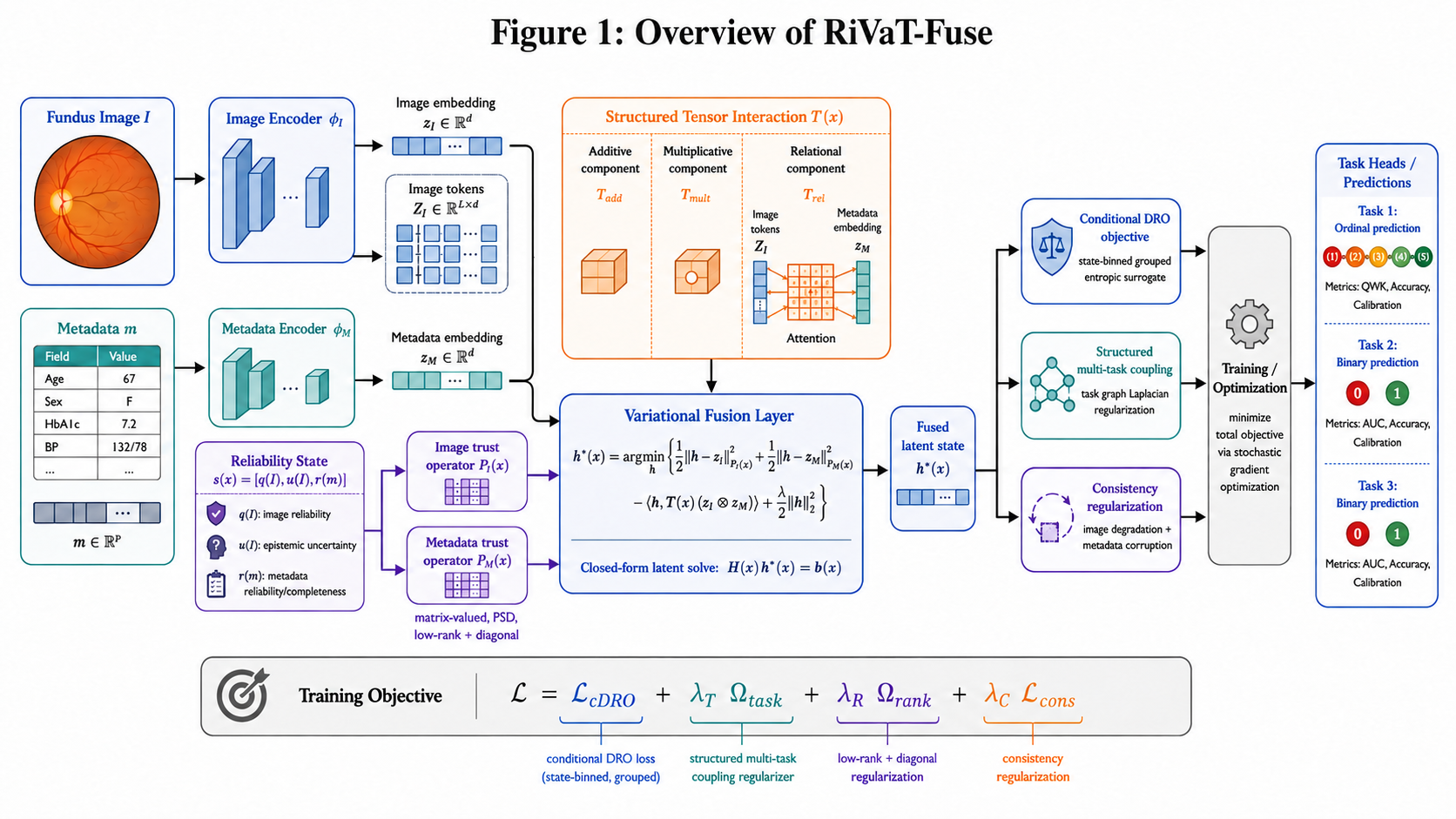}
    \caption{\textbf{Overview of RiVaT-Fuse.} RiVaT-Fuse defines multimodal fusion as a sample-wise latent solve. Image and metadata embeddings are combined through reliability-conditioned matrix-valued trust operators and structured tensor interaction, yielding a consensus latent state for multi-task prediction.}
    \label{fig:overview}
\end{figure}

\subsection{Variational Latent Fusion}

A standard early-fusion layer directly maps $(z_I,z_M)$ to a fused vector, leaving the meaning of that vector implicit. RiVaT-Fuse instead asks a more constrained question: \emph{what latent state is simultaneously close to image evidence, close to metadata evidence, supported by their interaction, and stable enough for prediction?} For each input $x=(I,m)$, this state is defined as
\begin{equation}
\label{eq:variational_fusion}
    h^\star(x)=
    \arg\min_{h\in\R^d}
    \left\{
    \frac{1}{2}\|h-z_I\|_{P_I(x)}^2+
    \frac{1}{2}\|h-z_M\|_{P_M(x)}^2
    -\left\langle h,\mathcal{T}(x)(z_I\otimes z_M)\right\rangle
    +\frac{\lambda}{2}\|h\|_2^2
    \right\}.
\end{equation}
The first two terms make fusion a reliability-weighted consensus problem rather than an unconstrained feature transformation. The interaction term prevents the consensus from becoming a simple average by rewarding cross-modal evidence that is useful only when modalities are considered jointly. The regularizer stabilizes the latent state and controls the conditioning of the solve.

Define
\begin{equation}
\label{eq:H_b}
    H(x)=P_I(x)+P_M(x)+\lambda I,
    \qquad
    b(x)=P_I(x)z_I+P_M(x)z_M+\mathcal{T}(x)(z_I\otimes z_M).
\end{equation}
When $H(x)$ is positive definite, the fused state has the closed-form expression
\begin{equation}
\label{eq:closed_form}
    h^\star(x)=H(x)^{-1}b(x).
\end{equation}
This expression is useful because it makes the roles of the two sides explicit: $H(x)$ encodes the geometry of reliability, while $b(x)$ combines modality evidence and cross-modal synergy. In implementation, we solve $H(x)h=b(x)$ directly rather than explicitly forming the inverse. Thus, the latent solve is not a cosmetic theoretical layer; it is the mechanism that enforces a reliability-weighted compromise for each sample.

\subsection{Reliability State and Matrix-Valued Trust}

The variational objective is only meaningful if the model can decide \emph{how} the two modalities should be trusted. RiVaT-Fuse therefore builds a sample state
\begin{equation}
    s(x)=[q(I),u(I),r(m)],
\end{equation}
where $q(I)$ estimates image reliability, $u(I)$ captures image-side uncertainty, and $r(m)$ captures metadata reliability or completeness. This state has two purposes: it controls the geometry of fusion through $P_I(x)$ and $P_M(x)$, and it controls robustness pressure through the conditional DRO radius below.

A scalar gate can only say that one modality is globally more important than the other. This is too coarse when one latent direction may encode visual morphology while another may benefit more from metadata context. We therefore use matrix-valued trust operators:
\begin{equation}
\label{eq:trust_image}
    P_I(x)=\Diag(d_I(s))+U_I\Diag(g_I(s))U_I^\top+\epsilon I,
\end{equation}
\begin{equation}
\label{eq:trust_meta}
    P_M(x)=\Diag(d_M(s))+U_M\Diag(g_M(s))U_M^\top+\epsilon I.
\end{equation}
The diagonal part captures dimension-wise reliability, while the low-rank part captures correlated trust directions. The positive-semidefinite construction keeps the latent system well behaved, and the low-rank-plus-diagonal form avoids the cost and instability of dense precision matrices. As a result, reliability is represented as an anisotropic geometry in latent space rather than as a single confidence score.

\subsection{Structured Tensor Interaction}

Reliability-weighted consensus alone would still be limited if the two modalities only contributed independently. RiVaT-Fuse therefore adds a structured interaction operator
\begin{equation}
\label{eq:tensor_decomposition}
    \mathcal{T}(x)=\Tadd(x)+\Tmult(x)+\Trel(x),
\end{equation}
where each component targets a different kind of cross-modal dependency.

The additive component captures direct modality-specific correction:
\begin{equation}
\label{eq:additive_component}
    \Tadd(x)(z_I\otimes z_M)=A_I(x)z_I+A_M(x)z_M.
\end{equation}
Its purpose is to handle cases where one modality shifts or calibrates the other without requiring high-order interaction. This keeps the model from forcing every cross-modal effect into an unnecessarily complex multiplicative form.

The multiplicative component captures contingency effects: a visual feature may matter only under a particular metadata context, or a metadata feature may become informative only when paired with certain image evidence. We model this through low-rank multiplicative factors:
\begin{equation}
\label{eq:multiplicative_component}
    \Tmult(x)(z_I\otimes z_M)
    =
    \sum_{k=1}^{K}\alpha_k(x)R_k(U_kz_I\odot V_kz_M).
\end{equation}
The low-rank factorization is important: it gives the model access to tensor-style interactions without constructing a full dense tensor, and the sample-adaptive coefficients $\alpha_k(x)$ allow different samples to emphasize different interaction components.

The relational component addresses a different failure mode. Global image embeddings can hide which local image tokens are relevant under a given metadata context. Let $Z_I\in\R^{L\times d}$ denote image tokens. We define
\begin{equation}
\label{eq:relational_component}
    \Trel(x)(z_I\otimes z_M)
    =
    W_R(x)\sum_{\ell=1}^{L}\pi_\ell(x)Z_{I,\ell},
\end{equation}
where $\pi_\ell(x)$ is a metadata-conditioned attention weight. This component lets metadata guide token-level evidence selection, complementing the feature-level conjunction captured by the multiplicative branch.

Together, these three interactions prevent the model from relying on a single cross-modal mechanism. Additive interaction supports correction, multiplicative interaction supports conjunction, and relational interaction supports context-guided token selection. Their combination makes the tensor operator expressive while keeping each submodule interpretable in terms of the fusion problem it solves.

\subsection{Conditional Distributionally Robust Learning}

A reliability-aware fusion model should not treat all training samples as equally trustworthy. If the image is unreliable, the uncertainty is high, or the metadata is incomplete, the model should be trained with stronger protection against local distribution shift. RiVaT-Fuse implements this idea through a state-dependent ambiguity radius
\begin{equation}
\label{eq:rho}
    \rho(s)=\rho_0+\rho_q(1-q)+\rho_u u+\rho_r(1-r).
\end{equation}
Thus, the same state vector that controls fusion geometry also controls robustness pressure: lower image reliability, higher uncertainty, and lower metadata reliability all enlarge the local robust budget.

The ideal conditional DRO objective is approximated in mini-batches using state-binned groups. Samples are assigned to bins according to a reliability-risk score derived from $s(x)$, and each group is associated with a learnable temperature parameter. This approximation avoids unstable per-sample robust optimization while preserving the central purpose of cDRO: the model should be more conservative exactly where the evidence is less reliable.

\subsection{Task Coupling and Quadratic Heads}

The prediction tasks share the same fused latent state but need not use it in identical ways. Independent heads ignore task relatedness, while overly shared heads can suppress task-specific decision boundaries. RiVaT-Fuse uses a graph-Laplacian task regularizer
\begin{equation}
\label{eq:task_regularizer}
    \taskreg(W)=\Tr(WL_TW^\top),
\end{equation}
where $W$ collects representative task coefficients and $L_T$ encodes task relations. This encourages compatible task geometry without forcing all tasks to collapse into the same classifier.

Each task head combines a linear term with a stabilized second-order decision surface:
\begin{equation}
\label{eq:quadratic_head}
    f_{t,c}(x)=w_{t,c}^{\top}h^\star+h^{\star\top}S_{t,c}h^\star+b_{t,c}.
\end{equation}
The quadratic term is included to capture nonlinear decision boundaries after fusion, while the structured parameterization keeps the head stable and avoids a fully dense quadratic form.

\subsection{Training Objective}

The final objective ties the above design choices together:
\begin{equation}
\label{eq:training_objective}
    \mathcal{L}
    =
    \cDRO
    +\lambda_T\taskreg
    +\lambda_R\rankreg
    +\lambda_C\cons.
\end{equation}
The objective is not a loose collection of penalties. Each term corresponds to a failure mode: conditional DRO handles unreliable samples, task coupling handles related outputs, rank control stabilizes interaction complexity, and consistency discourages prediction drift under controlled perturbations.

\paragraph{Theoretical interpretation.}
The variational formulation provides a compact theoretical interpretation without requiring a separate main-text theory section. Under a positive-definite latent system matrix $H(x)=P_I(x)+P_M(x)+\lambda I$, the objective in Eq.~\eqref{eq:variational_fusion} is strongly convex and admits a unique fused state $h^\star(x)=H(x)^{-1}b(x)$. A first-order perturbation analysis shows that changes in $h^\star$ are controlled by the conditioning of $H(x)$ and by perturbations to the modality embeddings and interaction operator. Therefore, the same reliability geometry that defines the latent solve also controls its stability. Under Lipschitz task heads, this latent stability transfers to prediction stability, and a sufficiently large margin implies label invariance under bounded perturbation. Formal statements and proof sketches are given in Appendix~\ref{app:proofs}.

\section{Experiments}

\subsection{Experimental Setup}

We evaluate RiVaT-Fuse on an image-level image--metadata prediction benchmark. Each input consists of an image and structured metadata, and the model predicts one ordinal outcome and two binary outcomes in our empirical instantiation. For clarity in the experimental tables, we refer to these as the ordinal task, binary task 1, and binary task 2; their dataset-specific clinical meanings are described in Appendix~\ref{app:dataset_protocol}.

All methods are evaluated under the same image-level protocol. This protocol is deliberately chosen to match the acquisition-level inference target of this paper and to ensure strict comparability across baselines. Patient-disjoint evaluation corresponds to a different deployment-oriented estimand and is not the primary target of this work.

All preprocessing and threshold selection are performed using training and validation data only. Metadata preprocessing is fit on the training set, image transformations are matched across compared models when applicable, and thresholds for binary tasks are selected on the validation set before final test evaluation using diagnostic operating-point criteria such as the Youden index \citep{youden1950index}. For image-based models, we use ResNet-style encoders \citep{he2016deep} with ImageNet normalization \citep{deng2009imagenet}; optimization is implemented in PyTorch \citep{paszke2019pytorch} using Adam/AdamW-style adaptive updates \citep{kingma2015adam,loshchilov2019decoupled}. For the ordinal prediction task, we report quadratic weighted kappa following the weighted agreement formulation of \citet{cohen1968weighted}.

The experiments address four mechanism-oriented questions: whether a variational consensus state improves over architecture-first fusion, whether matrix-valued trust improves beyond coarse weighting, whether the interaction branches provide complementary evidence, and whether reliability-aware robustness improves calibration and stability. The main text reports completed direct comparisons, diagnostics, and visual evidence; Appendix~\ref{app:mechanism_evidence} summarizes their mechanism-oriented interpretation without claiming component-level ablations.

\subsection{Baselines}

We focus the main comparison on unimodal and representation-level fusion baselines, since RiVaT-Fuse is an early fusion framework that learns a fused latent representation before task prediction. The unimodal baselines include metadata-only multilayer perceptrons and image-only ResNet-style models. The direct early-fusion baselines include Concat Fusion, FiLM-style conditional modulation, gated fusion, and cross-attention/co-attention fusion. These methods operate at the same representation level as RiVaT-Fuse and therefore provide the most appropriate comparison set for evaluating the proposed fusion principle.

To make comparisons interpretable, all direct baselines are evaluated under the same image-level label construction, split protocol, training-only preprocessing rule, validation-only model selection, and validation-only threshold selection whenever the corresponding component is applicable. This aligned design is crucial for this paper: the goal is to compare representation-level fusion principles, not to conflate architectural differences with inconsistent preprocessing, split definitions, or decision-threshold tuning. Decision-level late-fusion and hybrid/composite methods operate at a different level of the modeling pipeline; we therefore report them separately in Appendix~\ref{app:expanded_baselines} as expanded comparisons rather than direct representation-level baselines.

\begin{table*}[t]
\centering
\caption{\textbf{Predictive performance against direct representation-level baselines under the image-level multimodal prediction protocol.} The method is general, but this empirical benchmark reports one ordinal task and two binary tasks. The table includes discrimination, operating-point accuracy, sensitivity, and specificity. Avg. Rank is computed across the ten displayed predictive metrics, with lower values indicating stronger overall predictive ranking. Decision-level late-fusion and hybrid/composite systems are reported separately in Appendix~\ref{app:expanded_baselines}.}
\label{tab:main_results}
\scriptsize
\setlength{\tabcolsep}{1.8pt}
\resizebox{\textwidth}{!}{%
\begin{tabular}{llccccccccccc}
\toprule
Method & Type & \makecell{Ord.\\Acc} & \makecell{Ord.\\QWK} & \makecell{Bin.-1\\AUC} & \makecell{Bin.-1\\Acc} & \makecell{Bin.-1\\Sens} & \makecell{Bin.-1\\Spec} & \makecell{Bin.-2\\AUC} & \makecell{Bin.-2\\Acc} & \makecell{Bin.-2\\Sens} & \makecell{Bin.-2\\Spec} & \makecell{Avg.\\Rank} \\
\midrule
RiVaT-Fuse & \makecell[l]{Proposed early\\fusion} & \textbf{0.864} & \textbf{0.843} & \textbf{0.950} & \textbf{0.933} & \textbf{0.844} & 0.944 & \textbf{0.971} & \textbf{0.933} & 0.865 & 0.954 & \textbf{1.35} \\
\makecell[l]{Cross-/Co-Attention\\Fusion} & Early fusion & 0.719 & 0.585 & 0.868 & 0.867 & 0.744 & 0.883 & 0.866 & 0.787 & 0.746 & 0.799 & 6.40 \\
Gated Fusion & Early fusion & 0.825 & 0.800 & 0.926 & 0.894 & 0.756 & 0.912 & 0.947 & 0.904 & 0.832 & 0.927 & 3.85 \\
FiLM Fusion & Early fusion & 0.825 & 0.788 & 0.948 & 0.889 & 0.822 & 0.898 & 0.955 & 0.915 & 0.859 & 0.932 & 3.35 \\
Concat Fusion & Early fusion & 0.840 & 0.806 & 0.933 & 0.891 & 0.800 & 0.903 & 0.959 & 0.882 & \textbf{0.908} & 0.874 & 3.30 \\
Image ResNet & Image-only & 0.768 & 0.644 & 0.911 & 0.862 & 0.767 & 0.874 & 0.864 & 0.842 & 0.692 & 0.889 & 6.10 \\
Metadata MLP & Metadata-only & 0.832 & 0.667 & 0.940 & \textbf{0.933} & 0.489 & \textbf{0.991} & 0.944 & 0.898 & 0.703 & \textbf{0.959} & 3.65 \\
\bottomrule
\end{tabular}%
}
\vspace{0.25em}
\begin{minipage}{0.98\textwidth}
\footnotesize
\emph{Note.} In the retinal instantiation used for evaluation, the ordinal task corresponds to severity grading, and the two binary tasks correspond to two clinical decision outcomes. We keep generic column names in the main table to preserve the method-level framing; dataset-specific task names and full metric definitions are given in Appendix~\ref{app:dataset_protocol}. Bold marks the best value in each predictive column. Some operating-point metrics reveal expected sensitivity--specificity trade-offs, so Avg. Rank should be interpreted together with the individual columns.
\end{minipage}
\end{table*}

\subsection{Main Results}

RiVaT-Fuse achieves the strongest overall predictive profile across the direct representation-level comparison set, as summarized in Table~\ref{tab:main_results}. It obtains the best ordinal accuracy and QWK, the best AUC on both binary tasks, the best binary-task-1 sensitivity, the best binary-task-2 accuracy, and the strongest average rank across the displayed predictive metrics. These results support the central claim that the gain is not merely from using both image and metadata, but from estimating a reliability-calibrated fused latent state.

The comparison also reveals useful operating-point trade-offs. Metadata MLP achieves very high specificity, especially for binary task 1, but its sensitivity is substantially lower, indicating a conservative decision profile. Concat Fusion and FiLM are competitive early-fusion baselines, with Concat Fusion achieving the highest binary-task-2 sensitivity and FiLM showing strong binary AUCs. Gated Fusion performs well on the ordinal task and binary-task-2 AUC, while Cross-/Co-Attention Fusion is weaker in this benchmark despite being more expressive than simple concatenation. RiVaT-Fuse provides the best balance across ordinal severity prediction, binary discrimination, and sensitivity-oriented behavior.

The most meaningful comparison is therefore against strong representation-level fusion baselines rather than only against unimodal models. Concat Fusion directly tests whether simple early aggregation is sufficient. FiLM and gated fusion test conditional modulation and scalar trust. Cross-/Co-Attention Fusion tests whether attention-based interaction alone is sufficient. RiVaT-Fuse improves the overall predictive profile by coupling matrix-valued trust, structured interaction, and conditional robustness within the same variational latent fusion framework.

\begin{table*}[t]
\centering
\caption{\textbf{Calibration and perturbation-stability diagnostics for direct baselines.} These metrics complement Table~\ref{tab:main_results} by reporting confidence quality and prediction stability under perturbation. Lower values are better. Representation shift is reported in each model's native latent space and is therefore a diagnostic rather than a strictly scale-normalized cross-model metric.}
\label{tab:diagnostics}
\scriptsize
\setlength{\tabcolsep}{2.5pt}
\resizebox{\textwidth}{!}{%
\begin{tabular}{lcccccccccc}
\toprule
Method & \makecell{Mean\\ECE} & \makecell{Bin.-1\\Brier} & \makecell{Bin.-2\\Brier} & \makecell{Bin.-1\\$|\Delta p|$} & \makecell{Bin.-2\\$|\Delta p|$} & \makecell{Ord.\\Flip} & \makecell{Bin.-1\\Flip} & \makecell{Bin.-2\\Flip} & \makecell{Mean\\Flip} & \makecell{Repr.\\Shift} \\
\midrule
RiVaT-Fuse & 0.072 & 0.046 & \textbf{0.060} & \textbf{0.025} & \textbf{0.042} & 0.066 & \textbf{0.028} & \textbf{0.044} & \textbf{0.046} & 3.699 \\
\makecell[l]{Cross-/Co-Attention\\Fusion} & 0.179 & 0.152 & 0.134 & 0.046 & 0.062 & \textbf{0.065} & 0.110 & 0.140 & 0.105 & 2.427 \\
Gated Fusion & 0.076 & 0.055 & 0.078 & 0.034 & 0.076 & 0.102 & 0.078 & 0.071 & 0.084 & 5.561 \\
FiLM Fusion & 0.085 & 0.053 & 0.072 & 0.035 & 0.051 & 0.110 & 0.069 & 0.085 & 0.088 & 20.469 \\
Concat Fusion & 0.072 & \textbf{0.045} & 0.066 & 0.030 & 0.081 & 0.098 & 0.057 & 0.114 & 0.090 & 5.713 \\
Image ResNet & 0.115 & 0.055 & 0.125 & 0.035 & 0.066 & 0.094 & 0.036 & 0.071 & 0.067 & 8.013 \\
Metadata MLP & \textbf{0.032} & 0.054 & 0.075 & 0.063 & 0.105 & 0.144 & 0.135 & 0.129 & 0.136 & 3.030 \\
\bottomrule
\end{tabular}%
}
\end{table*}

\subsection{Diagnostic Interpretation and Component Questions}

Table~\ref{tab:diagnostics} shows that predictive strength and diagnostic behavior are not identical. Metadata MLP has the lowest mean ECE, but Table~\ref{tab:main_results} shows that this calibration advantage is paired with very low binary-task-1 sensitivity. This illustrates why calibration cannot be interpreted without operating-point behavior. Conversely, RiVaT-Fuse achieves the lowest mean label-flip rate, the smallest binary-task probability shifts, and the lowest binary-task-2 Brier score, while also ranking first in predictive performance. This pattern is consistent with the intended role of reliability-aware latent fusion: improving discrimination while reducing perturbation-induced output instability.

The representation-shift column should be interpreted cautiously because each architecture has a different latent representation scale. For this reason, the main cross-model stability conclusions rely more heavily on probability shifts and label-flip rates, which are directly comparable at the predictive-output level. Full per-task ECEs, binary QWKs, Brier scores, and representation-shift standard deviations are reported in Appendix~\ref{app:full_results}. We do not claim component-ablation evidence in this submission; instead, Appendix~\ref{app:mechanism_evidence} explains how the completed baseline, calibration, stability, and visualization analyses support a mechanism-oriented interpretation.

\begin{figure}[t]
    \centering
    \includegraphics[height=0.30\textheight,width=0.98\linewidth,keepaspectratio]{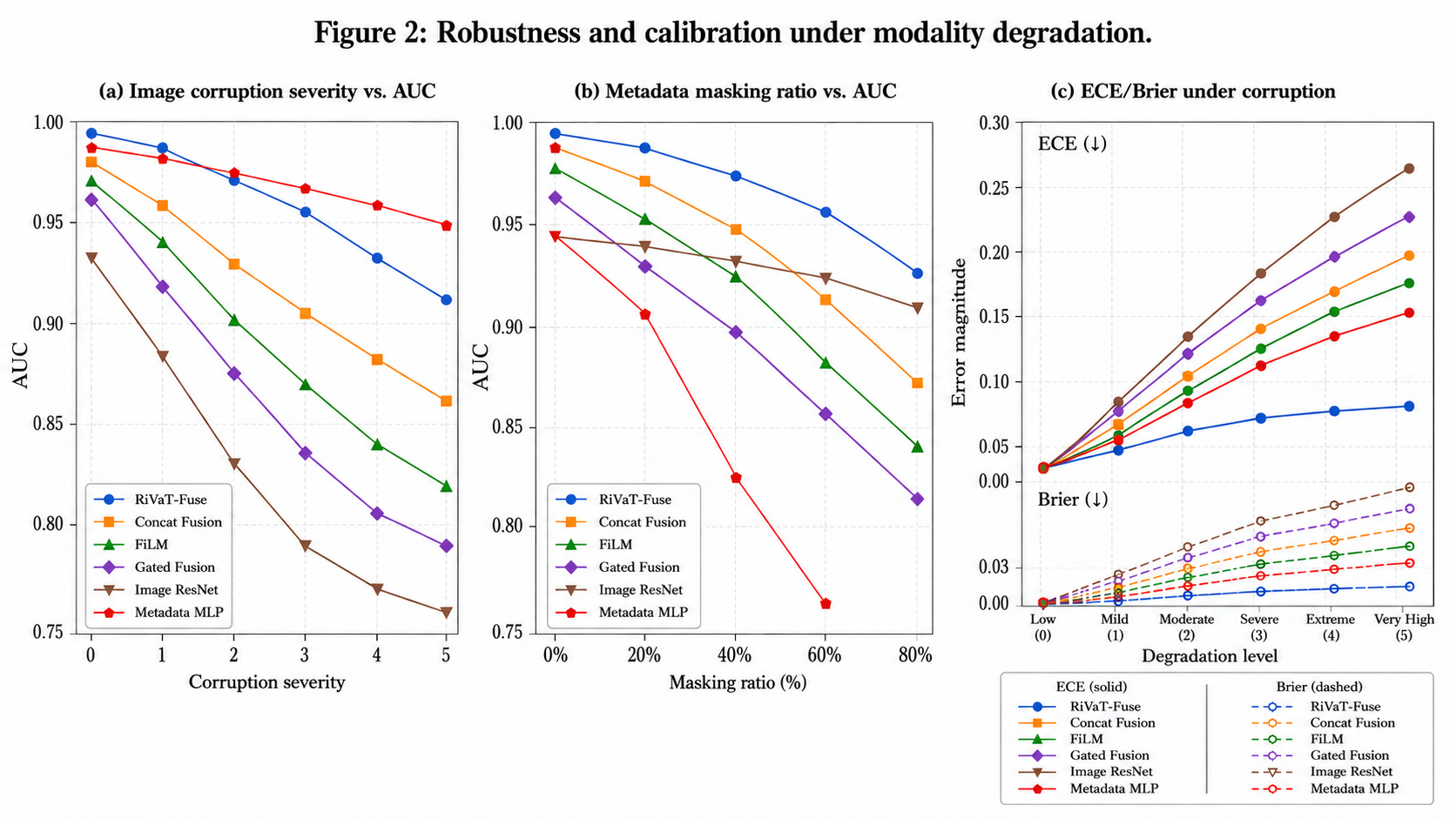}
    \caption{\textbf{Robustness and calibration under modality degradation.} RiVaT-Fuse is evaluated under controlled image degradation and metadata corruption. The curves test whether reliability-aware trust and cDRO improve stability beyond average clean performance.}
    \label{fig:robustness}
\end{figure}

\subsection{Robustness, Calibration, and Stability}

We next evaluate whether reliability-aware fusion improves robustness under controlled modality degradation. We apply image corruptions such as blur, illumination perturbation, noise, and masking, as well as metadata corruption through feature masking or missingness simulation. RiVaT-Fuse is expected to degrade more gracefully because less reliable samples receive stronger robustness pressure through the state-conditioned cDRO mechanism.

Because RiVaT-Fuse explicitly models reliability, calibration is also a key evaluation dimension. Table~\ref{tab:diagnostics} reports ECE and Brier score summaries for all direct baselines. The results show that the lowest ECE does not necessarily correspond to the strongest sensitivity or stability profile, motivating the use of calibration metrics together with discrimination and operating-point metrics.

We also measure latent and prediction stability under perturbation. Specifically, we compare representation shift, probability shift, and label-flip rate between clean and corrupted inputs. RiVaT-Fuse has the lowest mean label-flip rate among the direct baselines and the smallest probability shifts on both binary tasks, empirically connecting the theoretical stability analysis to observed model behavior.

\subsection{Dynamics and Qualitative Analysis}

The learned state variables evolve meaningfully during training rather than remaining static diagnostic scores. In particular, reliability, uncertainty, and the induced ambiguity radius jointly determine how robustness pressure is allocated across samples. This supports the interpretation of the state descriptor as a bridge between reliability estimation and robust fusion.

The learned trust operators and contribution profiles reveal that fusion is dynamically balanced across modalities and interactions. The model does not simply rely on a single dominant branch; instead, it learns changing contributions from image evidence, metadata evidence, and structured interaction. Representative attention visualizations further provide an intuitive view of the relational component by showing how metadata context guides attention over image tokens.

Taken together, the experimental design aims to validate the central claim of this paper: RiVaT-Fuse does not merely add more modules to a multimodal network, but changes the semantics of fusion from feature aggregation to reliability-calibrated latent-state estimation. The main results evaluate predictive performance, while the robustness and dynamics analyses test whether the learned behavior matches the intended reliability-aware design. Since no additional component-ablation experiments are included in this submission, we restrict the evidence to completed direct comparisons, calibration/stability diagnostics, and visualization-based mechanism analysis. Additional quantitative details, calibration curves, and case studies are provided in the appendix.

\section{Related Work}

\textbf{Multimodal representation fusion.} Multimodal fusion includes early fusion, late fusion, co-learning, and hybrid strategies \citep{baltrusaitis2018multimodal,ngiam2011multimodal}. Representation-level mechanisms include concatenation, gated multimodal units \citep{arevalo2017gmu}, conditional modulation \citep{perez2018film}, co-attention \citep{lu2016hierarchical,kim2018bilinear}, transformer-based fusion \citep{vaswani2017attention,tsai2019multimodal}, and Perceiver-style architectures \citep{jaegle2021perceiver}. Bilinear and tensor fusion methods provide multiplicative interactions through compact bilinear pooling, tensor fusion, low-rank fusion, and Tucker fusion \citep{fukui2016multimodal,zadeh2017tensor,liu2018efficient,benyounes2017mutan}. RiVaT-Fuse differs by defining fusion as a reliability-calibrated latent optimization solution rather than a directly parameterized aggregation layer.

\textbf{Reliability, robustness, and task structure.} Predictive uncertainty and calibration are central under distribution shift \citep{gal2016dropout,kendall2017uncertainties,lakshminarayanan2017simple,guo2017calibration,ovadia2019trust}. Distributionally robust and group-robust learning control worst-case loss under ambiguity or group shifts \citep{ben2013robust,duchi2018learning,namkoong2017variance,hu2018does,hashimoto2018fairness,sagawa2020groupdro}. Multi-task learning studies shared representations and task balancing \citep{caruana1997multitask,ruder2017overview,misra2016crossstitch,kendall2018multitask,chen2018gradnorm,sener2018multi}. RiVaT-Fuse connects these ideas by using reliability states to construct matrix-valued latent trust, state-dependent robustness, and task coupling inside the fusion layer.

\textbf{Medical image--metadata learning.} Retinal AI has achieved strong performance for disease detection and referral prediction from fundus or OCT imaging \citep{gulshan2016development,ting2017development,defauw2018clinically,abramoff2018pivotal}. More broadly, medical AI often combines imaging with clinical or tabular variables. Our main comparison focuses on representation-level fusion because RiVaT-Fuse is an early fusion framework.

\section{Conclusion and Limitations}

We introduced RiVaT-Fuse, a reliability-calibrated variational tensor fusion framework for multimodal prediction under modality uncertainty. By defining fusion as sample-wise latent-state estimation, RiVaT-Fuse unifies matrix-valued trust, structured cross-modal interaction, conditional robustness, and task coupling within one principled framework. This provides a methodologically grounded alternative to heuristic multimodal fusion and offers a representation-level fusion module that can be evaluated under aligned image--metadata prediction protocols.

\paragraph{Scope.}
The empirical study is centered on one primary image--metadata benchmark, enabling a controlled analysis of the proposed fusion principle. The target is image-level acquisition prediction; patient-disjoint deployment evaluation is a complementary clinical validation question rather than the primary estimand of this method paper. The latent solve and structured interaction modules add computation relative to simple fusion, but are implemented with structured operators and batched solves. These scope choices keep the paper focused on reliability-calibrated fusion while leaving broader deployment-oriented extensions to future work.

\bibliographystyle{plainnat}
\bibliography{references}

%%%%%%%%%%%%%%%%%%%%%%%%%%%%%%%%%%%%%%%%%%%%%%%%%%%%%%%%%%%%
\clearpage
\appendix
\onecolumn

\section*{Appendix Contents}
\addcontentsline{toc}{section}{Appendix Contents}
\begin{itemize}[leftmargin=1.6em,itemsep=0.35em]
    \item \hyperref[app:proofs]{Appendix~\ref*{app:proofs}: Derivations and stability analysis}
    \begin{itemize}[leftmargin=1.4em,itemsep=0.08em]
        \item \hyperref[app:proofs:matrix]{A.1 Matrix form of the latent objective}
        \item \hyperref[app:proofs:closedform]{A.2 Closed-form latent solution}
        \item \hyperref[app:proofs:wellposed]{A.3 Well-posedness}
        \item \hyperref[app:proofs:stability]{A.4 Perturbation and prediction stability}
        \item \hyperref[app:proofs:interpretation]{A.5 Interpretation of the theoretical claims}
    \end{itemize}
    \item \hyperref[app:implementation]{Appendix~\ref*{app:implementation}: Implementation details}
    \begin{itemize}[leftmargin=1.4em,itemsep=0.08em]
        \item \hyperref[app:implementation:model]{B.1 Model components}
        \item \hyperref[app:implementation:image]{B.2 Image branch}
        \item \hyperref[app:implementation:metadata]{B.3 Metadata branch}
        \item \hyperref[app:implementation:trust]{B.4 Trust, interaction, and robust objective}
        \item \hyperref[app:implementation:training]{B.5 Optimization and thresholding}
    \end{itemize}
    \item \hyperref[app:dataset_protocol]{Appendix~\ref*{app:dataset_protocol}: Dataset, labels, and image-level protocol}
    \begin{itemize}[leftmargin=1.4em,itemsep=0.08em]
        \item \hyperref[app:dataset:target]{C.1 Target estimand}
        \item \hyperref[app:dataset:labels]{C.2 Label construction}
        \item \hyperref[app:dataset:splits]{C.3 Splits and protocol transparency}
        \item \hyperref[app:dataset:preprocess]{C.4 Preprocessing rules}
    \end{itemize}
    \item \hyperref[app:baselines]{Appendix~\ref*{app:baselines}: Direct representation-level baselines}
    \begin{itemize}[leftmargin=1.4em,itemsep=0.08em]
        \item \hyperref[app:baseline:meta]{D.1 Metadata MLP}
        \item \hyperref[app:baseline:image]{D.2 Image ResNet}
        \item \hyperref[app:baseline:early]{D.3 Early-fusion baselines}
    \end{itemize}
    \item \hyperref[app:expanded_baselines]{Appendix~\ref*{app:expanded_baselines}: Decision-level and hybrid methods}
    \item \hyperref[app:full_results]{Appendix~\ref*{app:full_results}: Full numerical results}
    \begin{itemize}[leftmargin=1.4em,itemsep=0.08em]
        \item \hyperref[app:results:predictive]{F.1 Full predictive metrics}
        \item \hyperref[app:results:calibration]{F.2 Calibration metrics}
        \item \hyperref[app:results:stability]{F.3 Perturbation-stability metrics}
    \end{itemize}
    \item \hyperref[app:mechanism_evidence]{Appendix~\ref*{app:mechanism_evidence}: Mechanism-oriented interpretation of completed evidence}
    \begin{itemize}[leftmargin=1.4em,itemsep=0.08em]
        \item \hyperref[app:mech:completed]{G.1 Completed evidence blocks}
        \item \hyperref[app:mech:claims]{G.2 Claim boundary}
        \item \hyperref[app:mech:summary]{G.3 Interpretation summary}
    \end{itemize}
    \item \hyperref[app:robustness]{Appendix~\ref*{app:robustness}: Stability, calibration, and stress-test interpretation}
    \item \hyperref[app:visualization_atlas]{Appendix~\ref*{app:visualization_atlas}: Visualization atlas and artifact inventory}
    \begin{itemize}[leftmargin=1.4em,itemsep=0.08em]
        \item \hyperref[app:vis:tree]{I.1 Artifact tree summary}
        \item \hyperref[app:vis:history]{I.2 Training-history visualizations}
        \item \hyperref[app:vis:final]{I.3 Final-test visualizations}
        \item \hyperref[app:vis:probe]{I.4 Probe-panel temporal visualizations}
    \end{itemize}
    \item \hyperref[app:repr_diagnostics]{Appendix~\ref*{app:repr_diagnostics}: Representation-space diagnostics}
    \item \hyperref[app:case_studies]{Appendix~\ref*{app:case_studies}: Probe-level qualitative case studies}
    \item \hyperref[app:reproducibility]{Appendix~\ref*{app:reproducibility}: Reproducibility statement}
\end{itemize}

\clearpage
\section{Derivations and Stability Analysis}
\label{app:proofs}

\subsection{Matrix form of the latent objective}
\label{app:proofs:matrix}
For an image--metadata observation, the two modality encoders produce
\begin{equation}
    z_I=\phi_I(I)\in\mathbb{R}^d,\qquad z_M=\phi_M(m)\in\mathbb{R}^d.
\end{equation}
RiVaT-Fuse defines the fused representation as the solution of the quadratic latent objective
\begin{equation}
\begin{aligned}
    J(h;x)
    =&\frac{1}{2}\|h-z_I\|^2_{P_I(x)}+
      \frac{1}{2}\|h-z_M\|^2_{P_M(x)}
      -\langle h,\mathcal{T}(x)(z_I\otimes z_M)\rangle
      +\frac{\lambda}{2}\|h\|_2^2.
\end{aligned}
\end{equation}
Collecting all terms that depend on $h$ gives
\begin{equation}
    J(h;x)=\frac{1}{2}h^\top H(x)h-h^\top b(x)+C(x),
\end{equation}
where $C(x)$ is independent of $h$ and
\begin{equation}
    H(x)=P_I(x)+P_M(x)+\lambda I,
\end{equation}
\begin{equation}
    b(x)=P_I(x)z_I+P_M(x)z_M+\mathcal{T}(x)(z_I\otimes z_M).
\end{equation}
This form separates reliability geometry, encoded by $H(x)$, from the evidence-and-interaction vector, encoded by $b(x)$.

\subsection{Closed-form latent solution}
\label{app:proofs:closedform}
The derivative of $J(h;x)$ with respect to $h$ is
\begin{equation}
    \nabla_h J(h;x)=H(x)h-b(x).
\end{equation}
At the minimizer, $\nabla_h J(h;x)=0$, yielding the linear system
\begin{equation}
    H(x)h^\star(x)=b(x).
\end{equation}
When $H(x)$ is nonsingular, the solution is
\begin{equation}
    h^\star(x)=H(x)^{-1}b(x).
\end{equation}
The implementation solves the linear system directly rather than explicitly forming an inverse.

\subsection{Well-posedness}
\label{app:proofs:wellposed}
\begin{assumption}[Uniform positive definiteness]
For every sample $x$, there exists $\mu>0$ such that
\begin{equation}
    H(x)=P_I(x)+P_M(x)+\lambda I\succeq \mu I.
\end{equation}
\end{assumption}
\begin{theorem}[Well-posed latent fusion]
Under the above positive-definiteness assumption, the variational fusion objective is strongly convex in $h$, admits a unique minimizer, and the minimizer is $h^\star(x)=H(x)^{-1}b(x)$.
\end{theorem}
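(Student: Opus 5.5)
The argument rests on the quadratic representation from Appendix~\ref{app:proofs:matrix}, namely $J(h;x)=\tfrac12 h^\top H(x)h-h^\top b(x)+C(x)$. Before using it, I will verify the expansion behind it. For a symmetric weight matrix $P$,
\[
\tfrac12\|h-z\|_P^2=\tfrac12 h^\top P h-h^\top P z+\tfrac12 z^\top P z .
\]
The interaction term $-\langle h,\mathcal{T}(x)(z_I\otimes z_M)\rangle$ is linear in $h$, because $\mathcal{T}(x)(z_I\otimes z_M)$ depends only on $x$ and the embeddings, not on $h$. This holds for all three components $\Tadd$, $\Tmult$, $\Trel$. Collecting terms gives exactly $H(x)$ and $b(x)$ as in Eq.~\eqref{eq:H_b}, and the symmetry of $H(x)$ follows from the symmetric forms in Eqs.~\eqref{eq:trust_image}--\eqref{eq:trust_meta}.

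\textbf{Strong convexity.} $J(\cdot;x)$ is twice continuously differentiable with constant Hessian $\nabla_h^2 J=H(x)$. By the uniform positive definiteness assumption, $H(x)\succeq \mu I$. Equivalently, one can check directly that
\[
J(h')\ge J(h)+\langle \nabla J(h),h'-h\rangle+\tfrac{\mu}{2}\|h'-h\|^2 ;
\]
this is an exact identity with $H(x)$ in place of $\mu I$, since $J$ is quadratic. So $J$ is $\mu$-strongly convex in $h$.

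\textbf{Existence and uniqueness.} Strong convexity gives coercivity, $J(h)\ge J(0)-\|b(x)\|\|h\|+\tfrac{\mu}{2}\|h\|^2\to\infty$. Since $J$ is also continuous, a minimizer exists. Uniqueness follows from strict convexity: if two distinct minimizers existed, their midpoint would have a strictly smaller value, by the strong convexity inequality.

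\textbf{Identifying the minimizer.} For a differentiable convex function, $h$ is a global minimizer if and only if $\nabla_h J(h;x)=H(x)h-b(x)=0$, as computed in Appendix~\ref{app:proofs:closedform}. Since $H(x)\succeq\mu I$ with $\mu>0$, all eigenvalues of $H(x)$ are at least $\mu$, so $H(x)$ is invertible. The unique stationary point is therefore $h^\star(x)=H(x)^{-1}b(x)$. A completion-of-squares identity confirms it directly:
\[
J(h)=\tfrac12 (h-h^\star)^\top H(x)(h-h^\star)+\text{const}.
\]

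\textbf{Expected obstacle.} The only step needing care is the first one: that the interaction term contributes nothing quadratic in $h$ and that each $P$ is symmetric, so that the Hessian really is $H(x)$. If $P_I$ or $P_M$ were not symmetric, I would replace each by its symmetric part in the quadratic form, which leaves the objective unchanged. Under the paper's parameterization they are symmetric, so this issue does not arise, and the rest is standard.
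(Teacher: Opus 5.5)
Your proposal is correct and takes the same route as the paper's proof. Both identify the Hessian of $J(\cdot;x)$ as $H(x)$, use $H(x)\succeq\mu I$ to obtain strong convexity and a unique minimizer, and read off $h^\star(x)=H(x)^{-1}b(x)$ from the stationarity condition $H(x)h=b(x)$. Your extra checks make explicit what the paper's short argument takes for granted: that $\mathcal{T}(x)(z_I\otimes z_M)$ does not depend on $h$, that $H(x)$ is symmetric so its Hessian really is $H(x)$, coercivity for existence, and invertibility of $H(x)$.
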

\begin{proof}
The Hessian of $J(h;x)$ with respect to $h$ is $H(x)$. Since $H(x)\succeq\mu I\succ0$, $J(h;x)$ is strongly convex. A strongly convex quadratic objective has a unique global minimizer. Setting $\nabla_hJ(h;x)=0$ gives $H(x)h=b(x)$, and the closed-form expression follows.
\end{proof}

\subsection{Perturbation and prediction stability}
\label{app:proofs:stability}
Let $\Delta z_I$, $\Delta z_M$, and $\Delta\mathcal{T}$ denote perturbations in the image embedding, metadata embedding, and interaction operator. Treating $H$ as fixed to first order, the perturbation in $b(x)$ is
\begin{align}
\Delta b
&=P_I\Delta z_I+P_M\Delta z_M+\Delta\mathcal{T}(z_I\otimes z_M) \\
&\quad+\mathcal{T}(\Delta z_I\otimes z_M+z_I\otimes\Delta z_M)+\text{higher-order terms}.
\end{align}
Thus $\Delta h^\star=H^{-1}\Delta b$. Applying the triangle inequality and submultiplicativity of the operator norm gives
\begin{equation}
    \|\Delta h^\star\|_2\le \|H^{-1}\|_2\,\mathcal{B}(\Delta z_I,\Delta z_M,\Delta\mathcal{T}),
\end{equation}
where $\mathcal{B}$ collects the first-order perturbation terms. If a task head $f_t$ is $L_t$-Lipschitz, then
\begin{equation}
    |f_t(h^\star(x))-f_t(h^\star(x'))|\le L_t\|h^\star(x)-h^\star(x')\|_2.
\end{equation}
For classification, if the perturbation-induced score change is smaller than half of the clean prediction margin, the predicted label is unchanged.

\subsection{Interpretation of the theoretical claims}
\label{app:proofs:interpretation}
The theoretical statements are sufficient-condition results. They formalize two properties of the latent fusion layer: the sample-wise latent state is well defined under positive definiteness, and its sensitivity is controlled by the conditioning of the latent system and by bounded changes in modality evidence and interaction terms. The empirical sections evaluate whether this reliability-conditioned formulation leads to favorable behavior in the trained model through output-level probability shifts, label-flip rates, calibration metrics, and representation diagnostics.

\clearpage
\section{Implementation Details}
\label{app:implementation}

\subsection{Model components}
\label{app:implementation:model}
The faithful implementation contains an image branch, a metadata branch, state estimators, matrix-valued trust operators, structured interaction modules, a latent linear solve, and multi-task heads. The shared latent dimension is $d=256$. The trust operator uses a low-rank-plus-diagonal form with trust rank 32 and numerical stabilizer $\epsilon=10^{-3}$. The multiplicative interaction module uses four low-rank components with interaction rank 64. The quadratic heads use low-rank second-order structure with rank 8 and a small initialization scale.

\subsection{Image branch}
\label{app:implementation:image}
The image branch uses a ResNet-style encoder initialized from ImageNet-pretrained weights. Input images are resized to $512\times512$ and normalized with ImageNet statistics. Training uses image-level augmentation aligned with the direct image and early-fusion baselines, including light geometric perturbation and standard normalization. The implementation separates clean evaluation transforms from perturbations used for stability diagnostics.

\subsection{Metadata branch}
\label{app:implementation:metadata}
Continuous metadata variables are converted to numeric values, imputed using training-set statistics, and standardized using training-set means and standard deviations. Categorical variables are filled using training-set modes and encoded through training-set vocabularies with an unknown-token convention. Metadata completeness is computed before imputation and is retained as a reliability signal rather than being erased by preprocessing.

\subsection{Trust, interaction, and robust objective}
\label{app:implementation:trust}
The image and metadata trust operators are positive semidefinite by construction through diagonal gates plus low-rank basis matrices. The interaction operator decomposes into additive correction, multiplicative low-rank conjunction, and metadata-conditioned relational attention. The conditional robustness term uses state bins derived from the reliability state $[q,u,r]$ and a state-dependent ambiguity radius. The implementation uses warm-up and ramp-up schedules for robust and consistency terms to avoid destabilizing early optimization.

\subsection{Optimization and thresholding}
\label{app:implementation:training}
The model is optimized with AdamW using decoupled learning rates for the image backbone and non-backbone modules. The aligned run uses batch size 16 for training and 64 for evaluation, weight decay $10^{-4}$, gradient clipping with norm 2.0, and early stopping based on validation performance. Binary decision thresholds are selected on the validation split using the selected operating-point rule and are then fixed for the test evaluation.

\clearpage
\section{Dataset, Labels, and Image-Level Protocol}
\label{app:dataset_protocol}

\subsection{Target estimand}
\label{app:dataset:target}
The target estimand of this work is image-level acquisition prediction. Each sample corresponds to an image-level observation paired with structured metadata. This protocol is used consistently for RiVaT-Fuse and all direct representation-level baselines so that differences in performance are attributable to the fusion mechanism rather than to inconsistent split definitions.

\subsection{Label construction}
\label{app:dataset:labels}
The empirical retinal instantiation contains one ordinal severity task and two binary decision tasks. Patient-level aggregation is used to construct labels when required by the annotation structure, and the resulting labels are merged back to image-level rows. The referable-style binary outcome is defined by the combination of ordinal severity and edema-related status used by the evaluation script. Main-text tables use generic task names to preserve the method-level framing; the metric mapping is ordinal severity, binary outcome 1, and binary outcome 2.

\subsection{Splits and protocol transparency}
\label{app:dataset:splits}
The aligned run contains 3603 training images, 772 validation images, and 773 test images. The corresponding unique-patient counts are 1278, 630, and 604. Because the target estimand is image-level acquisition prediction, the split is intentionally image-level and is not a patient-disjoint deployment split. The observed patient overlap across image-level splits is part of the declared acquisition-level protocol rather than an unreported leakage issue.

\subsection{Preprocessing rules}
\label{app:dataset:preprocess}
All preprocessing statistics are fit using the training split only. Metadata imputation, standardization, and categorical vocabularies are derived from training data. Model selection and threshold tuning use validation data only. Test data are reserved for final reporting and are not used for preprocessing, checkpoint selection, or threshold selection.

\clearpage
\section{Direct Representation-Level Baselines}
\label{app:baselines}

\subsection{Metadata MLP}
\label{app:baseline:meta}
The metadata-only baseline uses the same structured variables and the same training-set-only preprocessing pipeline as the metadata branch of RiVaT-Fuse. It evaluates the predictive value of structured metadata without image evidence. This baseline is important because it establishes whether the metadata channel alone already carries strong signal for the ordinal and binary tasks.

\subsection{Image ResNet}
\label{app:baseline:image}
The image-only baseline uses the same image-level split, image resolution, normalization convention, and validation-threshold protocol as the image branch used in multimodal models. It evaluates how much information can be recovered from the image modality without structured metadata.

\subsection{Early-fusion baselines}
\label{app:baseline:early}
The direct early-fusion baselines operate at the representation level before task prediction. Concat Fusion combines image and metadata features through direct feature fusion. FiLM Fusion uses metadata-conditioned modulation of image-derived features. Gated Fusion learns a sample-dependent gate over fused evidence. Cross-/Co-Attention Fusion uses attention to exchange information between image and metadata representations. These models are direct comparators because they answer the same modeling question as RiVaT-Fuse: how to form a fused representation before prediction.

\clearpage
\section{Decision-Level and Hybrid Methods}
\label{app:expanded_baselines}

Decision-level late fusion and hybrid/composite methods operate at a different level of the modeling pipeline from RiVaT-Fuse. They combine predictions, logits, or outputs from multiple trained systems after representation learning has already occurred. The main paper therefore does not treat these methods as direct representation-level baselines. When reported, their role is an expanded comparison that answers a complementary question: how much can post-hoc or composite decision combination improve over individual representation-level models?

This separation is important for interpretation. A decision-level ensemble can outperform a representation-level model by leveraging multiple predictors, but such a result does not invalidate the representation-level contribution of RiVaT-Fuse. Instead, a strong RiVaT-Fuse representation can itself serve as an improved component in late-fusion or hybrid systems.

\clearpage
\section{Full Numerical Results}
\label{app:full_results}

\subsection{Full predictive metrics}
\label{app:results:predictive}
\begin{table}[h]
\centering
\caption{Full predictive metrics for direct baselines under validation-tuned thresholds. The main text reports a compact subset; this table keeps all predictive metrics produced by the evaluation script.}
\small
\setlength{\tabcolsep}{1.5pt}
\resizebox{\textwidth}{!}{%
\begin{tabular}{lrrrrrrrrrrrr}
\toprule
Method & \makecell{Ord.\\Acc} & \makecell{Ord.\\QWK} & \makecell{B1\\AUC} & \makecell{B1\\Acc} & \makecell{B1\\QWK} & \makecell{B1\\Sens} & \makecell{B1\\Spec} & \makecell{B2\\AUC} & \makecell{B2\\Acc} & \makecell{B2\\QWK} & \makecell{B2\\Sens} & \makecell{B2\\Spec} \\
\midrule
RiVaT-Fuse & 0.864 & 0.843 & 0.950 & 0.933 & 0.707 & 0.844 & 0.944 & 0.971 & 0.933 & 0.816 & 0.865 & 0.954 \\
Concat Fusion & 0.840 & 0.806 & 0.933 & 0.891 & 0.571 & 0.800 & 0.903 & 0.959 & 0.882 & 0.708 & 0.908 & 0.874 \\
FiLM Fusion & 0.825 & 0.788 & 0.948 & 0.889 & 0.571 & 0.822 & 0.898 & 0.955 & 0.915 & 0.771 & 0.859 & 0.932 \\
Gated Fusion & 0.825 & 0.800 & 0.926 & 0.894 & 0.564 & 0.756 & 0.912 & 0.947 & 0.904 & 0.743 & 0.832 & 0.927 \\
Cross-/Co-Attention & 0.719 & 0.585 & 0.868 & 0.867 & 0.492 & 0.744 & 0.883 & 0.866 & 0.787 & 0.482 & 0.746 & 0.799 \\
Image ResNet & 0.768 & 0.644 & 0.911 & 0.862 & 0.488 & 0.767 & 0.874 & 0.864 & 0.842 & 0.573 & 0.692 & 0.889 \\
Metadata MLP & 0.832 & 0.667 & 0.940 & 0.933 & 0.595 & 0.489 & 0.991 & 0.944 & 0.898 & 0.702 & 0.703 & 0.959 \\
\bottomrule
\end{tabular}}
\end{table}

\subsection{Calibration metrics}
\label{app:results:calibration}
\begin{table}[h]
\centering
\caption{Calibration and probability-quality metrics. ECE is reported for the ordinal and binary tasks; Brier score is reported for the binary tasks.}
\small
\begin{tabular}{lrrrrr}
\toprule
Method & Ord. ECE & B1 ECE & B2 ECE & B1 Brier & B2 Brier \\
\midrule
RiVaT-Fuse & 0.113 & 0.044 & 0.058 & 0.046 & 0.060 \\
Concat Fusion & 0.111 & 0.043 & 0.063 & 0.045 & 0.066 \\
FiLM Fusion & 0.131 & 0.053 & 0.070 & 0.053 & 0.072 \\
Gated Fusion & 0.112 & 0.047 & 0.069 & 0.055 & 0.078 \\
Cross-/Co-Attention & 0.094 & 0.292 & 0.151 & 0.152 & 0.134 \\
Image ResNet & 0.179 & 0.050 & 0.115 & 0.055 & 0.125 \\
Metadata MLP & 0.027 & 0.031 & 0.037 & 0.054 & 0.075 \\
\bottomrule
\end{tabular}
\end{table}

\subsection{Perturbation-stability metrics}
\label{app:results:stability}
\begin{table}[h]
\centering
\caption{Perturbation-stability diagnostics. Representation shifts are measured in each model's native latent scale and are therefore not directly comparable across architectures. Probability deltas and label-flip rates are output-level quantities and are directly comparable.}
\small
\resizebox{\textwidth}{!}{%
\begin{tabular}{lrrrrrrr}
\toprule
Method & Repr. Shift Mean & Repr. Shift Std & B1 Prob. $\Delta$ & B2 Prob. $\Delta$ & Ord. Flip & B1 Flip & B2 Flip \\
\midrule
RiVaT-Fuse & 3.699 & 2.567 & 0.025 & 0.042 & 0.066 & 0.028 & 0.044 \\
Concat Fusion & 5.713 & 3.229 & 0.030 & 0.081 & 0.098 & 0.057 & 0.114 \\
FiLM Fusion & 20.469 & 16.159 & 0.035 & 0.051 & 0.110 & 0.069 & 0.085 \\
Gated Fusion & 5.561 & 3.939 & 0.034 & 0.076 & 0.102 & 0.078 & 0.071 \\
Cross-/Co-Attention & 2.427 & 2.247 & 0.046 & 0.062 & 0.065 & 0.110 & 0.140 \\
Image ResNet & 8.013 & 1.280 & 0.035 & 0.066 & 0.094 & 0.036 & 0.071 \\
Metadata MLP & 3.030 & 2.133 & 0.063 & 0.105 & 0.144 & 0.135 & 0.129 \\
\bottomrule
\end{tabular}}
\end{table}

\clearpage
\section{Mechanism-Oriented Interpretation of Completed Evidence}
\label{app:mechanism_evidence}

\subsection{Completed evidence blocks}
\label{app:mech:completed}
The current submission includes three completed forms of mechanism-oriented evidence. First, direct representation-level baseline comparisons evaluate RiVaT-Fuse against alternative ways of forming early fused representations. Second, calibration and perturbation-stability diagnostics evaluate whether the method changes output-level behavior under input perturbation. Third, visualization outputs document training dynamics, reliability-state trajectories, trust summaries, contribution profiles, embedding geometry, and probe-level temporal behavior.

\subsection{Claim boundary}
\label{app:mech:claims}
The submission does not claim component-removal ablation evidence. The completed results support the full RiVaT-Fuse formulation as a representation-level fusion framework and show that it compares favorably with direct early-fusion baselines under the aligned protocol. They do not assign a separate quantitative effect size to every internal module, such as the trust operator, multiplicative interaction, relational interaction, cDRO term, or task coupling. This wording keeps the empirical claim aligned with the experiments included in the manuscript.

\subsection{Interpretation summary}
\label{app:mech:summary}
The strongest supported conclusion is that reliability-calibrated latent-state fusion provides a favorable complete-framework alternative to heuristic representation-level fusion. The predictive tables show improvements over direct early-fusion baselines. The stability table shows lower output-level probability shifts and generally lower label-flip rates. The visualization atlas provides qualitative evidence that reliability states, trust summaries, and contribution patterns evolve during training. Together, these completed analyses support the central mechanism story without relying on unreported ablation experiments.

\clearpage
\section{Stability, Calibration, and Stress-Test Interpretation}
\label{app:robustness}

The completed perturbation-stability evaluation compares clean and perturbed inputs at both the latent/representation level and the prediction-output level. The manuscript emphasizes output-level metrics for cross-model comparison because representation vectors live in model-specific spaces and can have different natural scales. Probability deltas and label-flip rates are therefore the primary stability diagnostics used across methods.

Calibration is evaluated with expected calibration error for the ordinal and binary outputs and with Brier score for binary outputs. Metadata MLP achieves strong ECE values, but its operating-point behavior shows a sensitivity--specificity imbalance on Binary-1. RiVaT-Fuse provides a stronger balance between discrimination, operating-point sensitivity, and perturbation stability. This distinction is important because a well-calibrated model can still be unsuitable for a screening-oriented decision point if sensitivity is too low.

Worst-group and reliability-stratified summaries are reported through the generated visualization artifacts when available. These diagnostics complement average performance by examining whether errors concentrate in low-reliability or difficult strata. They are used as descriptive stress-test evidence rather than as a standalone fairness claim.

\clearpage
\section{Visualization Atlas and Artifact Inventory}
\label{app:visualization_atlas}

This appendix summarizes the visualization artifacts generated by the analysis pipeline. The research archive contains serialized records, checkpoint snapshots, training-history plots, final diagnostic plots, and probe-panel sequences. These archive contents are described here but are not distributed in the arXiv source package. The visualization families are used to support different types of claims: optimization stability, reliability-state behavior, trust and contribution dynamics, representation geometry, calibration, perturbation stability, and qualitative case-level interpretation.

\subsection{Artifact tree summary}
\label{app:vis:tree}
The visualization output is organized under a single seed directory with the following top-level families:
\begin{itemize}[leftmargin=1.5em,itemsep=0.1em]
    \item \texttt{analysis\_records/}: serialized validation and test records.
    \item \texttt{checkpoints/}: selected best-checkpoint and periodic checkpoint snapshots from training.
    \item \texttt{plots\_history/}: epoch-level training, reliability, trust, interaction, and contribution curves.
    \item \texttt{plots\_final/}: final-test calibration, stability, confusion, embedding, and worst-group visualizations.
    \item \texttt{probe\_panels/}: temporal case-study panels and evolution GIFs for selected probe examples.
\end{itemize}
The complete file listing is retained as \texttt{directory\_tree.txt} in the research archive and is not included in the arXiv source package.

\subsection{Training-history visualizations}
\label{app:vis:history}
\begin{table}[h]
\centering
\caption{Training-history visualization families generated by the analysis pipeline.}
\small
\begin{tabularx}{\textwidth}{lX X}
\toprule
Family & Example files & Manuscript role \\
\midrule
Optimization dynamics & \texttt{loss\_metrics.png}, \texttt{lr\_patience.png} & Documents convergence, early stopping, and learning-rate behavior. \\
Task dynamics & \texttt{task\_metrics.png} & Shows how validation metrics evolve across the ordinal and binary tasks. \\
Reliability and trust & \texttt{state\_metrics.png}, \texttt{trust\_metrics.png} & Documents the learned state variables and trust summaries over epochs. \\
Interaction and geometry & \texttt{interaction\_metrics.png}, \texttt{fusion\_geometry.png} & Summarizes interaction strength and fused-representation geometry. \\
Contribution dynamics & \texttt{contribution\_metrics.png}, \nolinkurl{val_modality_contribution_stack.png} & Shows the relative contribution of image, metadata, and interaction terms. \\
\bottomrule
\end{tabularx}
\end{table}

\clearpage
\subsection{Final-test visualizations}
\label{app:vis:final}
\begin{table}[h]
\centering
\caption{Final diagnostic visualization families.}
\small
\begin{tabularx}{\textwidth}{lX X}
\toprule
Family & Example files & Manuscript role \\
\midrule
Reliability diagrams & \texttt{reliability\_dme.png}, \texttt{reliability\_ref.png} & Supports calibration analysis for the binary outputs. \\
Stability summary & \texttt{stability\_metrics.png} & Summarizes latent shift, probability shift, and label-flip behavior. \\
Confusion matrix & \texttt{test\_confusion\_dr.png} & Displays ordinal classification error patterns. \\
Fused embeddings & \texttt{test\_h\_pca\_y\_dr.png}, \texttt{test\_h\_tsne\_y\_ref.png} & Visualizes fused latent geometry by task label. \\
Image embeddings & \texttt{test\_z\_img\_pca\_y\_dr.png}, \texttt{test\_z\_img\_tsne\_y\_ref.png} & Visualizes image-branch geometry before fusion. \\
Metadata embeddings & \texttt{test\_z\_meta\_pca\_y\_dr.png}, \texttt{test\_z\_meta\_tsne\_y\_ref.png} & Visualizes metadata-branch geometry before fusion. \\
Worst-group summary & \texttt{worst\_group\_summary.png} & Describes performance on difficult or low-reliability strata. \\
\bottomrule
\end{tabularx}
\end{table}

\subsection{Probe-panel temporal visualizations}
\label{app:vis:probe}
The probe-panel outputs contain temporal sequences for six tracked probes: \texttt{probe\_6}, \texttt{probe\_26}, \texttt{probe\_32}, \texttt{probe\_49}, \texttt{probe\_233}, and \texttt{probe\_409}. For each probe, the artifact tree contains panels at multiple epochs and an evolution GIF. These sequences show how predictions, reliability state, trust summaries, and contribution patterns change from early training to the final selected checkpoint. They are used as qualitative temporal diagnostics rather than as quantitative evidence of component necessity.

\clearpage
\section{Representation-Space Diagnostics}
\label{app:repr_diagnostics}

The representation-space diagnostics compare image embeddings $z_I$, metadata embeddings $z_M$, and fused latent states $h^\star$ using PCA and t-SNE projections. The artifact inventory contains validation and test projections for the image branch, metadata branch, and fused latent representation, colored by ordinal and binary labels. These visualizations support the representation-level interpretation of RiVaT-Fuse by showing whether fused geometry organizes task labels differently from either unimodal representation.

PCA and t-SNE plots are interpreted qualitatively. Projection geometry can be affected by class imbalance, scaling, neighborhood parameters, and stochastic initialization. For this reason, embedding visualizations are used alongside quantitative metrics such as QWK, AUC, calibration, probability shift, and label-flip rate.

\clearpage
\section{Probe-Level Qualitative Case Studies}
\label{app:case_studies}

Probe-level case studies explain individual model behavior rather than aggregate performance. The tracked probes in the visualization package provide examples of temporal prediction trajectories across training. A case-study panel contains the image evidence, metadata context, reliability state, trust summaries, modality/interaction contribution summaries, and task predictions at selected checkpoints.

The case-study interpretation distinguishes three types of qualitative evidence. First, stable-correct cases show whether the model maintains correct decisions across training. Second, corrected-by-fusion cases show whether multimodal fusion resolves disagreement between image-only and metadata-only evidence. Third, failure cases reveal where reliability-aware fusion remains insufficient. Including all three categories prevents the qualitative appendix from presenting only favorable examples.

Attention or relational maps are interpreted as model-behavior visualizations rather than clinical explanations. They illustrate how the relational component allocates attention under metadata context, but they are not used as standalone evidence of causal or clinical reasoning.

\clearpage
\section{Reproducibility Statement}
\label{app:reproducibility}

The experiments use a fixed image-level split seed and fixed training seed for the aligned comparison. The run logs record train, validation, and test image counts, unique-patient counts, selected thresholds, checkpoint-selection metrics, and final test outputs. The image-level split identifier is part of the target protocol and is stored with the evaluation artifacts.

The implementation uses Python with PyTorch and torchvision, a ResNet-style image backbone, AdamW optimization, gradient clipping, early stopping, and validation-selected binary thresholds. The research archive contains serialized validation/test records, checkpoint snapshots, training-history plots, final diagnostic plots, and probe-panel sequences. Reproduction involving the original dataset remains subject to its access conditions.

The archived visualization package is organized into \texttt{analysis\_records/}, \texttt{checkpoints/}, \texttt{plots\_history/}, \texttt{plots\_final/}, and \texttt{probe\_panels/}. This structure links numerical results, checkpoints, and visual diagnostics, making the reported figures traceable to the corresponding evaluation records.

\paragraph{Statistical uncertainty and computational reporting.}
The aligned comparison reports fixed-seed results and detailed diagnostics, but does not report multi-seed error bars or statistical significance tests due to computational constraints. The exact compute worker type, GPU memory, and wall-clock time are not fully specified for every experiment.

\paragraph{Data access and release scope.}
The arXiv source package contains the manuscript sources and the figures required to rebuild this paper. It does not distribute executable model code, checkpoints, serialized evaluation records, or raw clinical images. Dataset access and any separate artifact release remain subject to the original asset conditions. The study uses existing de-identified data resources and does not introduce a new dataset or conduct crowdsourcing, new human-subject recruitment, or intervention; original dataset governance is handled by the dataset provider.

\paragraph{Responsible use.}
RiVaT-Fuse is a research framework rather than a stand-alone clinical decision system. Potential benefits in multimodal prediction must be considered alongside clinical over-reliance, distribution-shift risks, and privacy considerations. Deployment requires local validation and human oversight; broader dataset validation would strengthen generality.

\paragraph{Methodology and tool assistance.}
LLMs are not used as an important, original, or non-standard component of the core method, experiments, or scientific claims. Any ordinary writing or formatting assistance does not affect the methodology.

\end{document}